%% file: main.tex
\documentclass{article}

 \usepackage[preprint]{neurips_2026}

\usepackage[utf8]{inputenc} 
\usepackage[T1]{fontenc}    
\usepackage{hyperref}       
\usepackage{url}            
\usepackage{booktabs}       
\usepackage{amsfonts}       
\usepackage{nicefrac}       
\usepackage{microtype}      
\usepackage{xcolor}  
\usepackage{multirow}
\usepackage{enumitem}
\usepackage{amsmath,amssymb,amsthm}
\usepackage{graphicx}
\usepackage{wrapfig}
\usepackage{subcaption}

\theoremstyle{plain}
\newtheorem{theorem}{Theorem}[section]
\newtheorem{proposition}[theorem]{Proposition}

\theoremstyle{definition}

\theoremstyle{remark}

\theoremstyle{problem}

\title{LatentRouter: Can We Choose the Right Multimodal Model Before Seeing Its Answer?}

\author{%
  Xueqi Cheng\\
  Department of Computer Science\\
  Florida State University University\\
  \texttt{xc25@fsu.edu} \\
\And
  Yushun Dong \\
  Department of Computer Science\\
  Florida State University University \\
  \texttt{yushun.dong@fsu.edu} \\
}

\begin{document}
\maketitle

\begin{abstract}
Multimodal large language models (MLLMs) have heterogeneous strengths across OCR, chart understanding, spatial reasoning, visual question answering, cost, and latency. Effective MLLM routing therefore requires more than estimating query difficulty: a router must match the multimodal requirements of the current image-question input with the capabilities of each candidate model. We propose \textsc{LatentRouter}, a router that formulates MLLM routing as counterfactual multimodal utility prediction. Given an image-question query, \textsc{LatentRouter} extracts learned multimodal routing capsules, represents each candidate MLLM with a model capability token, and performs latent communication between these states to estimate how each model would perform if selected. A distributional outcome head predicts model-specific counterfactual quality, while a bounded capsule correction refines close decisions without allowing residual signals to dominate the prediction. The resulting utility-based policy supports performance-oriented and performance-cost routing, and handles changing candidate pools through shared per-model scoring with availability masking. Experiments on MMR-Bench and VL-RouterBench show that \textsc{LatentRouter} outperforms fixed-model, feature-level, and learned-router baselines. Additional analyses show that the gains are strongest on multimodal task groups where model choice depends on visual, layout-sensitive, or reasoning-oriented requirements, and that latent communication is the main contributor to the improvement. The code is available at: 
\url{https://github.com/LabRAI/LatentRouter}.
\end{abstract}

\input{intro}

\input{method}
\input{exp_setup}
\input{result}
\input{related}
\input{con}

\bibliographystyle{plain}
\bibliography{ref}

\appendix

\newpage
\input{appendix}

\end{document}

%% file: intro.tex
\section{Introduction}\label{sec:intro}

Multimodal large language models (MLLMs) are rapidly diversifying in architecture~\cite{yin2023mllm_survey,liang2024mllm_guide,jin2024efficient_mllm_survey}, scale~\cite{wang2024qwen2vl,chen2024internvl25,bai2025qwen25vl}, cost~\cite{jin2024efficient_mllm_survey,qiu2025efficient_mllm_serving,aneja2026phi4rv}, latency~\cite{jin2024efficient_mllm_survey,qiu2025efficient_mllm_serving,aneja2026phi4rv}, and capability profile~\cite{wang2024qwen2vl,chen2024internvl25,li2024llavaonevision}. These models now support a broad range of vision-language tasks, including visual question answering~\cite{goyal2017making,hudson2019gqa,singh2019textvqa}, optical character recognition~\cite{liu2023ocrbench,fu2025ocrbenchv2,yang2024ccocr}, document and chart understanding, spatial reasoning, and multimodal mathematical reasoning~\cite{lu2023mathvista,wang2024mathvision,zhang2024mathverse}. However, stronger average performance does not imply that one MLLM is uniformly best. Different models often specialize in different types of multimodal inputs: one model may be reliable on dense text and document images, another may be stronger on charts or diagrams, and another may offer a better quality-cost tradeoff for general visual question answering~\cite{xie2025mmeunify,ma2026mmrbench,huang2025vlrouterbench, cheng2025bts, cheng2025misleader}. This heterogeneity makes MLLM routing especially important in real-world multimodal systems where many models are available, but always calling the strongest model can be unnecessarily expensive, slow, or impractical under deployment constraints.

Despite this practical need, MLLM routing remains underdeveloped compared with text-only LLM routing~\cite{ma2026mmrbench,huang2025vlrouterbench}. Most existing routers for multi-LLM systems were designed for language-only inputs, where routing decisions are typically based on text-derived query features~\cite{ding2024hybridllm,ong2024routellm}, confidence or uncertainty estimates~\cite{zhang2025uncertaintyrouting,aggarwal2023automix}, cascading policies~\cite{chen2023frugalgpt,dekoninck2024unifiedrouting}, or hierarchical routing structures~\cite{varangotreille2025routing,tian2026haps}. These approaches are useful when query difficulty can be inferred mainly from text, but MLLM routing is different: the best model often depends on multimodal routing signals that arise from the image, the question, and their interaction. For example, the router may need to recognize whether the input requires reading dense text, interpreting a chart, following spatial relations, comparing objects, or reasoning over a diagram before it can decide which candidate MLLM is likely to succeed. A natural workaround is to convert the image into explicit textual signals, such as captions, OCR transcripts, or predefined task labels, but these signals can be costly~\cite{hao2025coconut,he2026plume}, incomplete~\cite{kamath2023text_bottleneck,he2026plume}, and insufficient when the routing-relevant information is primarily visual~\cite{shao2026modalmixed,xu2025visualplanning, xu2026synhat}. As a result, MLLM routing faces two coupled challenges: the router must preserve compact but informative multimodal routing signals from the input, and it must align these query requirements with heterogeneous model capabilities to predict which MLLM is most useful for the current query.

To address these challenges, we propose \textsc{LatentRouter}, a latent capability-matching router that formulates MLLM routing as \emph{counterfactual multimodal utility prediction}. Instead of directly mapping a query to a fixed model ID, \textsc{LatentRouter} predicts how each candidate MLLM would perform on the current image-question input and selects the model with the highest predicted utility. To preserve compact multimodal routing signals, it extracts a small set of learned multimodal routing capsules from image and question representations. To align query requirements with model capabilities, it represents each candidate MLLM with a model capability token constructed from calibration, cost, latency, and slice-level behavior, and then performs latent communication between routing capsules and model tokens. The final model states are used for distributional counterfactual outcome prediction, while a bounded capsule correction refines close decisions without allowing noisy residual signals to dominate. The resulting utility-based rule supports performance-oriented routing, performance-cost routing, and changing model pools through shared per-model scoring with availability masking. Our main contributions are as follows:
\begin{itemize}[leftmargin=*]
    \item \textbf{A counterfactual utility formulation for MLLM routing.}
    We formulate MLLM routing as predicting each candidate model's outcome on the same multimodal query. This formulation supports performance-oriented routing, performance-cost routing, and changing model pools through shared per-model utility scoring with availability masking.

    \item \textbf{\textsc{LatentRouter}: a latent capability-matching architecture.}
    We introduce a router that represents the query with learned multimodal routing capsules and represents candidate MLLMs with model capability tokens. Through latent communication, the router aligns query requirements with model capabilities and produces model-specific counterfactual utility estimates.

    \item \textbf{Comprehensive evaluation on MLLM routing benchmarks.}
    Experiments on MMR-Bench and VL-RouterBench show that \textsc{LatentRouter} outperforms fixed-model, feature-level, and learned-router baselines. Further analyses examine counterfactual outcome prediction, component ablations, task-group behavior, efficiency, and model-pool flexibility.
\end{itemize}

%% file: method.tex



\section{Methodology}
\label{sec:method}

\begin{figure}[t]
    \centering
    \includegraphics[width=\linewidth]{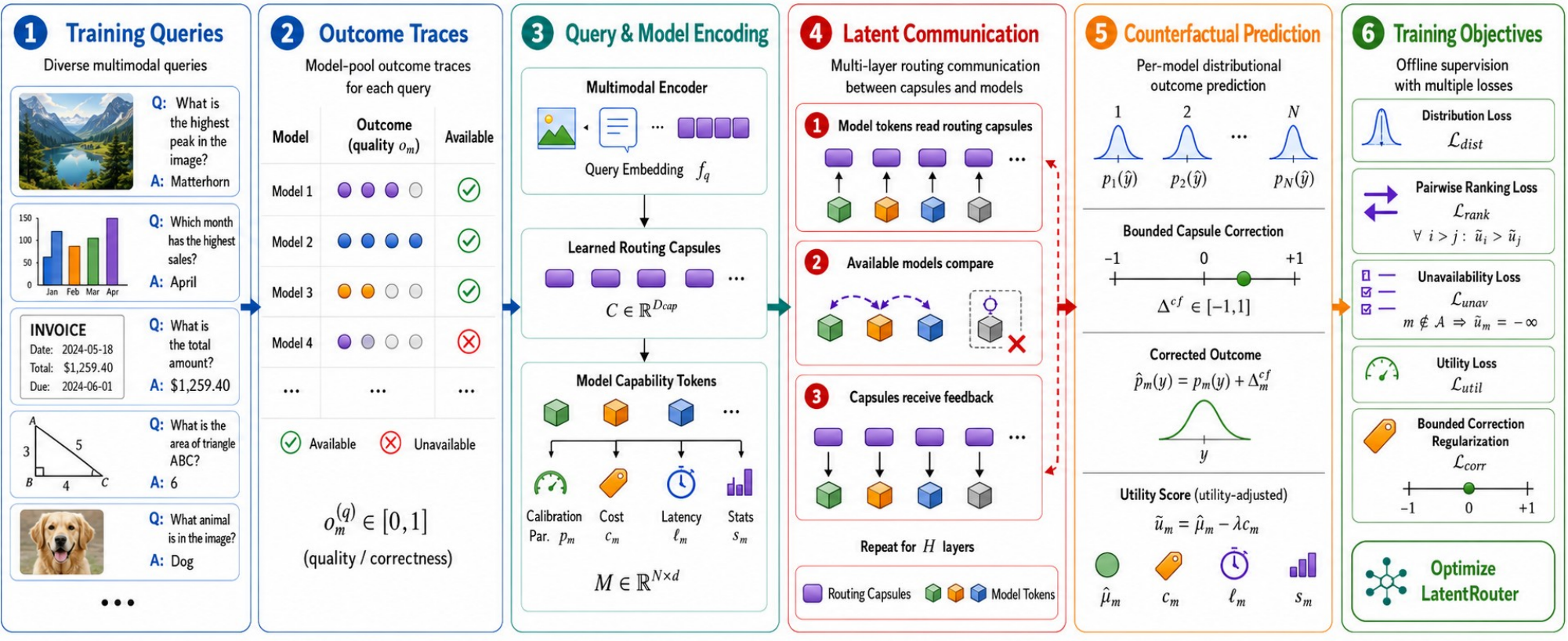}
    \caption{
    Training workflow of \textsc{LatentRouter}. 
    Given an image-question query and model-pool outcome traces, the router constructs multimodal routing capsules and model capability tokens, performs latent communication, predicts per-model counterfactual outcomes, applies bounded capsule correction, and optimizes the composite training objective. 
    }
    \label{fig:latentrouter_training_workflow}
\end{figure}

We introduce \textsc{LatentRouter}, a latent capability-matching router for selecting among a pool of candidate MLLMs. The central idea is to treat routing as counterfactual utility prediction: before selecting a model, the router estimates how each available candidate would perform on the same image-question query. \textsc{LatentRouter} represents the query with learned multimodal routing capsules, represents each candidate MLLM with a model capability token, and aligns these two sets of latent states through latent communication. The training workflow is shown in Figure~\ref{fig:latentrouter_training_workflow}.

\subsection{Problem Formulation}
\label{sec:problem_formulation}

Let $x=(I,q)$ denotes a multimodal query, where $I$ is an image and $q$ is a natural-language question. Let $\mathcal{M}=\{m_1,\ldots,m_K\}$ be the full candidate MLLM pool. For each query $x$, only a subset of models may be available; we denote the available index set by $\Omega(x)\subseteq\{1,\ldots,K\}$. A routing trace contains the query $x$, the availability mask $\Omega(x)$, the observed answer quality $y_i(x)$ for each available model $m_i$, and model-level deployment attributes such as cost and latency. We normalize answer quality and cost to make utilities comparable across models and benchmarks.

Given a cost tradeoff parameter $\lambda\geq 0$, the utility of model $m_i$ on query $x$ is $u_i(x)=y_i(x)-\lambda c_i$, where $c_i$ is the normalized cost of using model $m_i$. When $\lambda=0$, the objective is performance-oriented routing. When $\lambda>0$, the objective is performance-cost routing.

A conventional router can be written as a policy $\pi_\theta(x,\mathcal{M})\in\mathcal{M}$, which directly outputs a selected model. However, this notation hides the counterfactual structure of the problem. At routing time, the router chooses a model before observing its answer, so it must estimate how each model would perform on the same input. We therefore learn a scoring function $s_\theta(x,m_i,\mathcal{M})\approx u_i(x)$, and select $\pi_\theta(x,\mathcal{M})
=
\arg\max_{i\in\Omega(x)} s_\theta(x,m_i,\mathcal{M})$. This formulation uses the full outcome vector over the candidate pool during training. It also allows the same router to score variable model pools by masking unavailable models.

\subsection{\textsc{LatentRouter}}
\label{sec:main_method}

\textsc{LatentRouter} implements $s_\theta(x,m_i,\mathcal{M})$ through latent capability matching. It consists of five components: multimodal routing capsules, model capability tokens, latent communication layers, a distributional outcome predictor, and a bounded capsule correction.

\paragraph{Why query-model interaction is necessary.}
The best MLLM depends on how the current query requirements match each candidate model's capabilities. A router that assigns independent query and model scores cannot express query-dependent model preference reversals: the same model pair would have the same ordering for every input. However, MLLM routing often requires such reversals, e.g., one model may be better for OCR-heavy inputs while another may be better for chart or spatial reasoning. We therefore use an interaction-dependent score $s_\theta(x,m_i,\mathcal{M}) = f_\theta(R(x),a_i,A)$, where $R(x)$ denotes multimodal routing capsules, $a_i$ is the capability token of model $m_i$, and $A$ is the set of model capability tokens. Appendix~\ref{app:proofs_method} formalizes why additive scoring is insufficient.

\paragraph{Multimodal routing capsules.}
MLLM routing requires identifying which aspects of the image-question input determine model choice, such as visual text, layout, object relations, spatial cues, or image-question alignment. A single pooled query vector may blur these factors, while different candidate models may depend on different ones. \textsc{LatentRouter} therefore represents the query as a small set of learned multimodal routing capsules.

Let $V=\{v_1,\ldots,v_N\}$ be image tokens and $Q=\{q_1,\ldots,q_L\}$ be question tokens from a multimodal encoder. Given $C$ learned capsule queries $\{z_k^{\mathrm{cap}}\}_{k=1}^{C}$, we extract capsules by cross-attending each query to the joint image-question sequence:
\[
r_k^{(0)}
=
\mathrm{CrossAttn}(z_k^{\mathrm{cap}}, [V;Q]),
\qquad
k=1,\ldots,C.
\]
The initial capsule set is $R^{(0)}(x)=\{r_1^{(0)},\ldots,r_C^{(0)}\}$. These capsules are learned latent routing states, not predefined task labels. They are trained end-to-end through the per-model outcome prediction and routing losses, so they retain information only insofar as it helps predict which candidate MLLM will perform well on the current query.

\paragraph{Model capability tokens.}
For each candidate model $m_i$, \textsc{LatentRouter} constructs a model capability token from model-level metadata and calibration statistics as $a_i^{(0)}=\phi([p_i;c_i;\ell_i;b_i])$. Here, $p_i$ is the calibration profile of model $m_i$, $c_i$ is normalized cost, $\ell_i$ is latency, and $b_i$ contains optional slice-level or pairwise calibration statistics when available. The projection $\phi(\cdot)$ maps these features into the same hidden dimension as the routing capsules. The model-token set is $A^{(0)}=\{a_1^{(0)},\ldots,a_K^{(0)}\}$. Unavailable models are included in notation but masked in attention and scoring. This design avoids a fixed $K$-way model classifier: the same scoring head is shared across model tokens, and routing is performed over the currently available pool.

\paragraph{Latent communication.}
The routing capsules encode query-side multimodal requirements, while model tokens encode candidate MLLM capabilities. \textsc{LatentRouter} aligns these states through \emph{counterfactual latent communication}: each model token reads the query capsules from its own capability perspective, model tokens compare with other available candidates, and routing capsules receive feedback from the most decision-relevant model states. This makes the communication routing-specific while keeping the intermediate process in continuous latent space.

Given routing capsules $R^{(h)}=\{r_1^{(h)},\ldots,r_C^{(h)}\}$ and model tokens $A^{(h)}=\{a_1^{(h)},\ldots,a_K^{(h)}\}$, each communication layer performs three updates. First, each model token reads the capsules through model-conditioned attention:
\[
\alpha_{ik}^{(h)}
=
\mathrm{softmax}_{k}
\left(
\frac{(W_q a_i^{(h)})^\top (W_k r_k^{(h)})}{\sqrt{d}}
\right),
\qquad
\hat r_i^{(h)}
=
\sum_{k=1}^{C}\alpha_{ik}^{(h)} W_v r_k^{(h)} .
\]
The model token is then updated as $\bar a_i^{(h)}
=
\mathrm{FFN}
\left(
[a_i^{(h)};\hat r_i^{(h)};a_i^{(h)}\odot \hat r_i^{(h)}]
\right)$. This produces a candidate-specific view of the query: different MLLMs can attend to different routing signals, such as visual text, layout, chart structure, or spatial cues.

Second, model tokens compare with one another through availability-masked pairwise communication. For each pair, we compute a comparison bias
\[
b_{ij}^{(h)}
=
\psi_\theta
\left(
[\bar a_i^{(h)};\bar a_j^{(h)};
\bar a_i^{(h)}-\bar a_j^{(h)};
\bar a_i^{(h)}\odot \bar a_j^{(h)}]
\right),
\]
and inject it into masked self-attention $A^{(h+1)}
=
\mathrm{SelfAttn}_{\Omega(x)}
\left(
\{\bar a_1^{(h)},\ldots,\bar a_K^{(h)}\};
B^{(h)}
\right)$. The mask removes unavailable models, while the pairwise bias makes the update explicitly comparative. Setting $B^{(h)}=0$ recovers standard masked self-attention.

Third, routing capsules receive decision-aware feedback from the updated model tokens. A lightweight layerwise score $s_i^{(h)}$ estimates the current utility of each available model, and feedback weights are assigned as $\omega_i^{(h)}
=
\mathrm{softmax}_{i\in\Omega(x)}
\left(
-\frac{\max_j s_j^{(h)}-s_i^{(h)}}{\tau}
\right)$. The routing capsules are updated by attending to the weighted model states $R^{(h+1)}
=
\mathrm{CrossAttn}
\left(
R^{(h)},
\{\omega_i^{(h)}a_i^{(h+1)}\}_{i\in\Omega(x)}
\right)$. This feedback emphasizes candidates near the current decision boundary, making the query representation aware of which multimodal signals distinguish the available models. After $H$ layers, each final model token $a_i^{(H)}$ is a query-conditioned, pool-aware representation used for the next step.

\paragraph{Distributional counterfactual outcome prediction.}
For each available model, the final model token predicts a distribution over the model's answer quality:
\[
(\mu_i,\eta_i)=g_\theta(a_i^{(H)}),
\qquad
\sigma_i=\mathrm{softplus}(\eta_i)+\epsilon.
\]
The mean $\mu_i$ estimates the expected quality of model $m_i$ on query $x$, while $\sigma_i$ captures predictive uncertainty. This distributional prediction is useful because routing traces provide more than the identity of the best model: they provide a full vector of observed outcomes over candidate MLLMs. Learning this vector encourages the router to model the relative strengths and weaknesses of the pool.

\paragraph{Bounded capsule correction.}
The distributional head provides a stable base estimate, but some fine-grained multimodal routing signals may still be underused. We therefore add a bounded correction from the final routing capsules:
\[
\tilde{\mu}_i
=
\mu_i+\Delta_i^{\mathrm{cap}},
\qquad
\Delta_i^{\mathrm{cap}}
=
\rho\cdot
\tanh\!\left(
h_\theta([a_i^{(H)};\mathrm{Pool}(R^{(H)})])
\right).
\]
The scalar $\rho$ limits the correction magnitude. Thus, the residual can refine close model comparisons, but cannot arbitrarily override confident base predictions. Appendix~\ref{app:proofs_method} gives more detailed proof.

\paragraph{Routing policy.}
The final routing score is the corrected predicted quality minus the cost penalty:
\[
s_i(x)=\tilde{\mu}_i(x)-\lambda c_i.
\]
The selected model is $\pi_\theta(x,\mathcal{M})
=
\arg\max_{i\in\Omega(x)} s_i(x)$. Changing $\lambda$ yields different quality-cost operating points. All unavailable models are masked before the final maximization.

\subsection{Training and Deployment}
\label{sec:training_objective}

\textbf{Training}. \textsc{LatentRouter} is trained on model-pool outcome traces. For each query, losses are computed only over available models $i\in\Omega(x)$.

The primary loss is a heteroscedastic Gaussian negative log-likelihood for normalized answer quality:
\[
\mathcal{L}_{\mathrm{dist}}
=
\sum_{i\in\Omega(x)}
\left[
\frac{(y_i-\mu_i)^2}{2\sigma_i^2}
+
\log \sigma_i
\right].
\]
This trains the distributional outcome predictor to estimate both the mean outcome and its uncertainty.

To align prediction with routing decisions, we add a pairwise ranking loss over model utilities:
\[
\mathcal{L}_{\mathrm{pair}}
=
\sum_{\substack{i,j\in\Omega(x)\\ u_i>u_j}}
\log\left(1+\exp(-(s_i-s_j))\right).
\]
This encourages models with higher observed utility to receive higher predicted routing scores. We then use a listwise loss to match the predicted utility distribution to the observed utility distribution:
\[
\mathcal{L}_{\mathrm{list}}
=
\mathrm{CE}
\left(
\mathrm{softmax}(\mathbf{u}_{\Omega}/\tau),
\mathrm{softmax}(\mathbf{s}_{\Omega}/\tau)
\right),
\]
where $\mathbf{u}_{\Omega}$ and $\mathbf{s}_{\Omega}$ are utility and score vectors restricted to available models, and $\tau$ is a temperature. Finally, we include direct utility regression and residual regularization:
\[
\mathcal{L}_{\mathrm{util}}
=
\sum_{i\in\Omega(x)}(s_i-u_i)^2,
\qquad
\mathcal{L}_{\mathrm{res}}
=
\sum_{i\in\Omega(x)}(\Delta_i^{\mathrm{cap}})^2.
\]
The full training objective is $\mathcal{L}
=
\mathcal{L}_{\mathrm{dist}}
+
\alpha\mathcal{L}_{\mathrm{pair}}
+
\beta\mathcal{L}_{\mathrm{list}}
+
\gamma\mathcal{L}_{\mathrm{util}}
+
\eta\mathcal{L}_{\mathrm{res}}$. The hyperparameters $\alpha,\beta,\gamma,\eta,\rho$, and $\tau$ are selected on the validation split and fixed during testing.

\textbf{Deployment.} At deployment time, \textsc{LatentRouter} receives a new image-question query and an available candidate model pool. It constructs routing capsules from the query, constructs model capability tokens for the available MLLMs, applies latent communication, predicts corrected outcomes $\tilde{\mu}_i$, and selects the model with the highest score $\tilde{\mu}_i-\lambda c_i$. Because the outcome predictor is shared across model tokens and unavailable models are masked, the same architecture can be applied to different model subsets without changing the classifier dimension.

%% file: exp_setup.tex
\section{Experimental Setup}
\label{sec:exp_setup}

\subsection{Datasets, Models, and Metrics}
\label{sec:datasets_models_metrics}

We evaluate \textsc{LatentRouter} on two most recent standard MLLM routing benchmarks: MMR-Bench~\cite{ma2026mmrbench} and VL-RouterBench~\cite{huang2025vlrouterbench}. Both benchmarks provide image-question queries, candidate MLLM pools, per-model answer-quality annotations, and model-cost information. MMR-Bench covers OCR, general VQA, and multimodal reasoning tasks, while VL-RouterBench covers general VQA, STEM reasoning, chart/OCR, and document understanding. Following the original benchmark settings, we evaluate 11 candidate models on MMR-Bench and 17 candidate models on VL-RouterBench; the full model lists are provided in Appendix~\ref{app:exp_details}.

We report results under two settings. The \emph{performance-oriented} setting evaluates whether a router selects high-quality models, while the \emph{performance-cost} setting evaluates the quality-cost tradeoff induced by cost-aware routing. We follow the official benchmark metrics: MMR-Bench reports nAUC over the cost-quality frontier, and VL-RouterBench reports its official Rank Score.

\subsection{Baselines}
\label{sec:baselines}

We compare \textsc{LatentRouter} with three groups of baselines:

\begin{itemize}[leftmargin=*]
    \item \textbf{Training-free baselines.}
    We include \textbf{Oracle}, \textbf{Strongest}, \textbf{Cheapest}, and \textbf{Random}. Oracle uses ground-truth outcomes and serves only as an upper bound; Strongest selects the best validation model; Cheapest selects the lowest-cost model; and Random samples uniformly from the pool.

    \item \textbf{Feature-level routers.}
    We compare with standard feature-based routers, including \textbf{KNNRouter}, \textbf{MLPMFRouter}, and \textbf{KMeansRouter}. These methods are trained on the same benchmark-provided router features and splits, and test whether conventional feature-level routing is sufficient without latent query-model interaction.

    \item \textbf{Learned router baselines.}
    We compare with the most recent MLLM router \textbf{ECVL}~\cite{tang2025ecvl}, as well as four representative text-based LLM routers, \textbf{RouteLLM}~\cite{ong2024routellm} \textbf{RouterDC}~\cite{chen2024routerdc}, \textbf{GraphRouter}~\cite{feng2024graphrouter} and \textbf{R2-Router}~\cite{xue2026r2}. All learned baselines are adapted to the same candidate pools, splits, cost annotations, and routing features, and are implemented according to their original papers.
\end{itemize}

\subsection{Implementation Details}
\label{sec:implementation_details}

To ensure a fair comparison, \textsc{LatentRouter} and all trainable baselines use the same benchmark-provided image features, question features, side features, model-pool metadata, cost annotations, and model-outcome traces. \textsc{LatentRouter} projects query features into a shared hidden space and summarizes them with $C=7$ learned multimodal routing capsules. Each candidate MLLM is represented by a model capability token built from calibration profile, normalized cost, latency, and available slice-level or pairwise statistics, all computed only from the training/calibration split to avoid test-set leakage. We use $H=2$ latent communication layers, where model tokens attend to routing capsules, interact through availability-masked self-attention, and update the capsules through feedback attention. A shared distributional outcome head predicts per-model mean and uncertainty, followed by bounded capsule correction. All trainable methods are trained on the official training split, tuned on validation, and evaluated on test over three random seeds.

%% file: result.tex
\begin{table}[t]
\centering
\setlength\tabcolsep{9pt}
\caption{Main routing performance on MMR-Bench and VL-RouterBench. MMR-Bench reports nAUC ($\uparrow$), and VL-RouterBench reports Rank Score ($\uparrow$). Best non-oracle results are in \textbf{bold}; second-best non-oracle results are \underline{underlined}.}
\label{tab:main_results}
\begin{tabular}{lcccc}
\toprule
\multirow{2}{*}{Router} & \multicolumn{2}{c}{MMR-Bench} & \multicolumn{2}{c}{VL-RouterBench} \\
\cmidrule(lr){2-3} \cmidrule(lr){4-5}
& Oriented & Cost & Oriented & Cost \\
\midrule
Oracle & 0.918 $\pm$ 0.000 & 0.899 $\pm$ 0.000 & 0.932 $\pm$ 0.000 & 0.932 $\pm$ 0.000 \\
Strongest & 0.758 $\pm$ 0.000 & 0.758 $\pm$ 0.000 & 0.776 $\pm$ 0.000 & 0.776 $\pm$ 0.000 \\
Cheapest & 0.438 $\pm$ 0.000 & 0.438 $\pm$ 0.000 & 0.675 $\pm$ 0.000 & 0.675 $\pm$ 0.000 \\
Random & 0.608 $\pm$ 0.005 & 0.606 $\pm$ 0.004 & 0.607 $\pm$ 0.004 & 0.608 $\pm$ 0.004 \\
\midrule
KNNRouter & 0.726 $\pm$ 0.000 & 0.724 $\pm$ 0.000 & 0.770 $\pm$ 0.000 & 0.768 $\pm$ 0.000 \\
MLPMFRouter & 0.739 $\pm$ 0.002 & 0.738 $\pm$ 0.005 & 0.762 $\pm$ 0.002 & 0.760 $\pm$ 0.003 \\
KMeansRouter & 0.758 $\pm$ 0.000 & \underline{0.759 $\pm$ 0.001} & 0.781 $\pm$ 0.004 & 0.782 $\pm$ 0.004 \\
\midrule
ECVL & 0.748 $\pm$ 0.000 & 0.748 $\pm$ 0.000 & 0.779 $\pm$ 0.000 & 0.762 $\pm$ 0.000 \\
RouteLLM & \underline{0.759 $\pm$ 0.008} & 0.634 $\pm$ 0.002 & 0.780 $\pm$ 0.004 & 0.688 $\pm$ 0.001 \\
RouterDC & 0.582 $\pm$ 0.003 & 0.619 $\pm$ 0.011 & 0.732 $\pm$ 0.002 & 0.683 $\pm$ 0.002 \\
GraphRouter & 0.734 $\pm$ 0.001 & 0.714 $\pm$ 0.009 & 0.783 $\pm$ 0.011 & 0.773 $\pm$ 0.002 \\
R2-Router & 0.736 $\pm$ 0.006 & 0.725 $\pm$ 0.010 & \underline{0.790 $\pm$ 0.002} & \underline{0.785 $\pm$ 0.004} \\
\midrule
\textbf{\textsc{LatentRouter}} & \textbf{0.792 $\pm$ 0.001} & \textbf{0.786 $\pm$ 0.001} & \textbf{0.815 $\pm$ 0.001} & \textbf{0.812 $\pm$ 0.003} \\
\bottomrule
\end{tabular}
\end{table}

\section{Experiment Results}
\label{sec:result}

We evaluate \textsc{LatentRouter} around four questions.
\textbf{RQ1}: Does \textsc{LatentRouter} improve routing performance across benchmarks and cost settings?
\textbf{RQ2}: Where do the gains appear across multimodal task types?
\textbf{RQ3}: Which components are responsible for the improvement?
\textbf{RQ4}: Does the router support changing candidate model pools while remaining efficient?

\subsection{Main Routing Performance}
\label{sec:main_results}

For \textbf{RQ1}, Table~\ref{tab:main_results} reports aggregate results on MMR-Bench and VL-RouterBench. \textsc{LatentRouter} achieves the best non-oracle result in all four settings. Compared with the strongest non-oracle baseline in each column, the improvement is statistically significant under an approximate two-sided Welch test computed from the reported mean and standard deviation over three seeds: MMR-Oriented ($p=0.0178<0.05$), MMR-Cost ($p=4.99{\times}10^{-6}<0.001$), VL-Oriented ($p=3.40{\times}10^{-4}<0.001$), and VL-Cost ($p=0.0010<0.01$). This pattern is important because the best competing baseline changes across benchmarks and settings: fixed-model selection is competitive in some cases, feature-level routers are strong on VL-RouterBench, and recent interaction-style routers such as GraphRouter and R2-Router narrow the gap. \textsc{LatentRouter} remains strongest across all of them, suggesting that the gain comes from a more effective query-model matching mechanism rather than from a benchmark-specific shortcut.

Figure~\ref{fig:cost_task_analysis} further shows that the improvement is not tied to a single operating point. Figures~\ref{fig:cost_quality_mmr} and~\ref{fig:cost_quality_vl} show that \textsc{LatentRouter} gives the strongest cost-quality frontier on both benchmarks, indicating that the learned utility estimate remains useful under different cost budgets. Figure~\ref{fig:task_group_radar} shows that the gains are most visible on OCR, Chart/Diagram, and Math tasks, where the best model depends on fine-grained multimodal signals such as visual text, layout, chart structure, and visual reasoning. The smaller gain on General VQA is expected because broadly strong MLLMs already perform well on many general examples, but the positive improvement still shows exploitable routing heterogeneity.

\begin{figure}[t]
    \centering

    \includegraphics[width=0.95\linewidth]{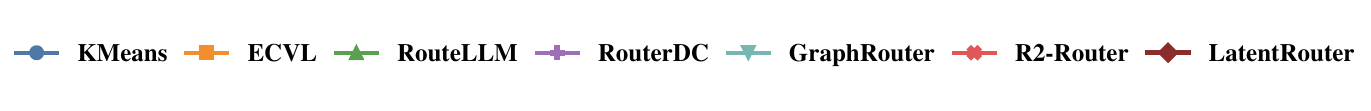}
    \vspace{-1em}

    \begin{subfigure}[t]{0.295\linewidth}
        \centering
        \includegraphics[width=\linewidth]{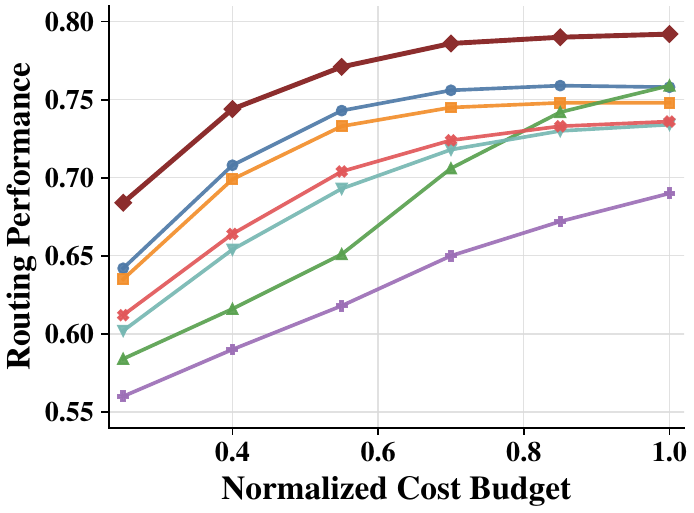}
        \caption{MMR-Bench}
        \label{fig:cost_quality_mmr}
    \end{subfigure}
    \hfill
    \begin{subfigure}[t]{0.295\linewidth}
        \centering
        \includegraphics[width=\linewidth]{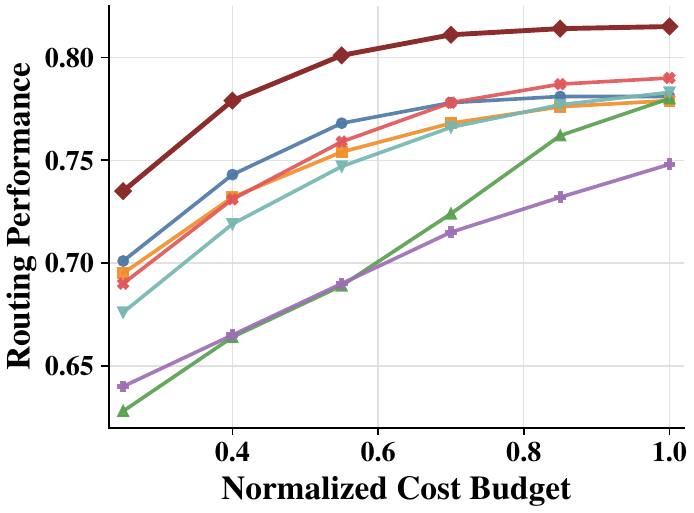}
        \caption{VL-RouterBench}
        \label{fig:cost_quality_vl}
    \end{subfigure}
    \hfill
    \begin{subfigure}[t]{0.35\linewidth}
        \centering
        \includegraphics[width=\linewidth]{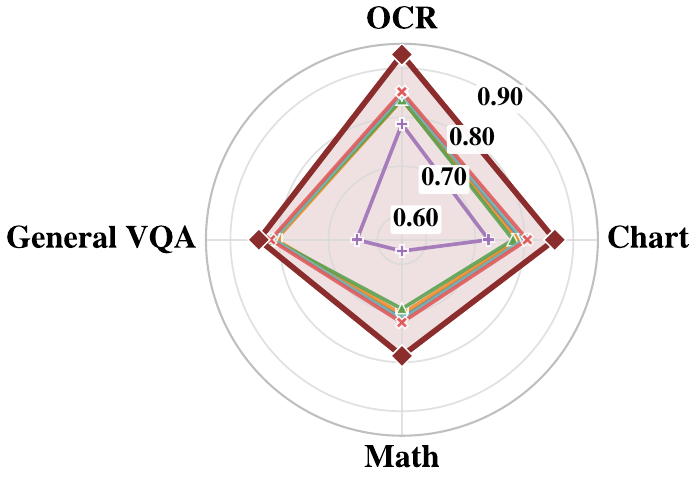}
        \caption{Task groups}
        \label{fig:task_group_radar}
    \end{subfigure}

    \caption{
    Cost-quality and task-level analysis. 
    Panels (a)--(b) show cost-quality frontiers under different cost budgets; \textsc{LatentRouter} improves the frontier on both benchmarks.
    Panel (c) groups datasets from both benchmarks by task type; gains are largest on tasks where model choice depends strongly on visual, layout-sensitive, or reasoning-oriented query requirements.
    }
    \label{fig:cost_task_analysis}
    \vspace{-1em}
\end{figure}

\subsection{Counterfactual Outcome Prediction}
\label{sec:outcome_prediction}

For \textbf{RQ2}, Table~\ref{tab:outcome_prediction} directly evaluates whether the router predicts the full counterfactual outcome vector. \textsc{LatentRouter} performs best across calibration-oriented and ranking-oriented metrics. The improvement in MSE is modest, which suggests that several methods can fit average outcome values. The more important gains appear in NDCG, Spearman correlation, and Top-3 recall, because routing depends on placing strong candidate models near the top rather than perfectly calibrating every score.

The no-communication variant isolates the role of latent communication. It uses the same query and model information, but cannot iteratively align routing capsules with model capability tokens. Its MSE remains close to the full model, while its ranking metrics are weaker. This indicates that latent communication mainly improves the relative ordering of candidate MLLMs, especially among top candidates where the final routing decision is made.

\begin{table}[t]
\centering
\small
\setlength\tabcolsep{9pt}
\caption{Counterfactual outcome prediction quality. We evaluate whether each router can predict the full per-model outcome vector.
Best results are in \textbf{bold}; second-best results are \underline{underlined}.}
\label{tab:outcome_prediction}
\begin{tabular}{lcccc}
\toprule
Method & MSE $\downarrow$ & NDCG $\uparrow$ & Spearman $\uparrow$ & Top-3 Recall $\uparrow$ \\
\midrule
Direct classifier & 0.207 $\pm$ 0.017 & 0.893 $\pm$ 0.009 & 0.347 $\pm$ 0.077 & 0.269 $\pm$ 0.121 \\
Point-score router & \underline{0.161 $\pm$ 0.012} & \underline{0.913 $\pm$ 0.011} & \underline{0.413 $\pm$ 0.078} & 0.268 $\pm$ 0.025 \\
No latent communication & 0.162 $\pm$ 0.012 & 0.912 $\pm$ 0.010 & 0.411 $\pm$ 0.079 & \underline{0.274 $\pm$ 0.033} \\
\textsc{LatentRouter} & \textbf{0.159 $\pm$ 0.012} & \textbf{0.915 $\pm$ 0.009} & \textbf{0.425 $\pm$ 0.074} & \textbf{0.283 $\pm$ 0.050} \\
\bottomrule
\end{tabular}
\end{table}

\subsection{Mechanism Analysis}
\label{sec:mechanism_analysis}

For \textbf{RQ3}, Figure~\ref{fig:mechanism_analysis} explains how the main components contribute to routing performance. Figure~\ref{fig:mechanism_analysis}a shows that moving from $H=0$ to $H=1$ gives a clear gain, meaning that even one round of communication between routing capsules and model capability tokens is useful. Performance reaches its best or near-best value at $H=2$ and then saturates. This is expected because each communication layer already contains a full query-model-pool update: model tokens read routing capsules, compare with other available model tokens, and send feedback to the capsules. After two rounds, the main routing-relevant alignment has largely been formed, so additional layers mostly repeat similar comparisons rather than introducing new information. 

Figure~\ref{fig:mechanism_analysis}b gives the component-level view. Removing latent communication causes the largest drop, confirming it as the central mechanism. Removing model capability tokens or routing capsules also hurts, showing that the router needs both model-side capability information and query-side multimodal routing signals. The distributional prediction head and bounded correction provide smaller but consistent gains: the former improves counterfactual outcome estimation, while the latter helps refine close model comparisons without destabilizing the base prediction. Together, these results show that \textsc{LatentRouter}'s gains come from the intended capsule-token matching process.

\begin{figure}[t]
    \centering
    \begin{subfigure}{0.43\linewidth}
        \centering
        \includegraphics[width=\linewidth]{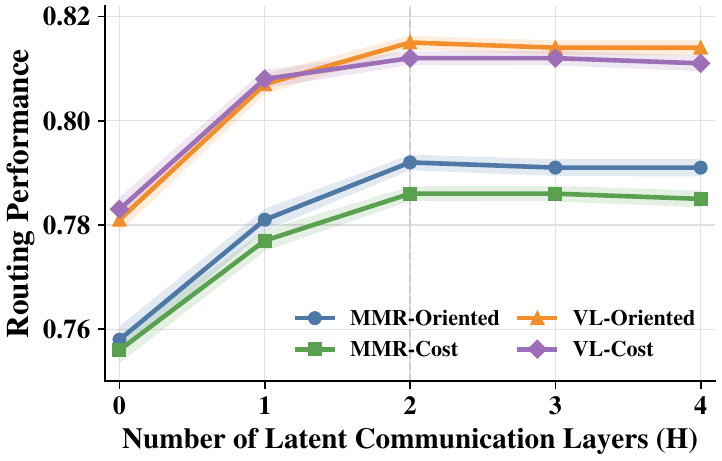}
        \caption{Communication depth}
    \end{subfigure}
    \hfill
    \begin{subfigure}{0.56\linewidth}
        \centering
        \includegraphics[width=\linewidth]{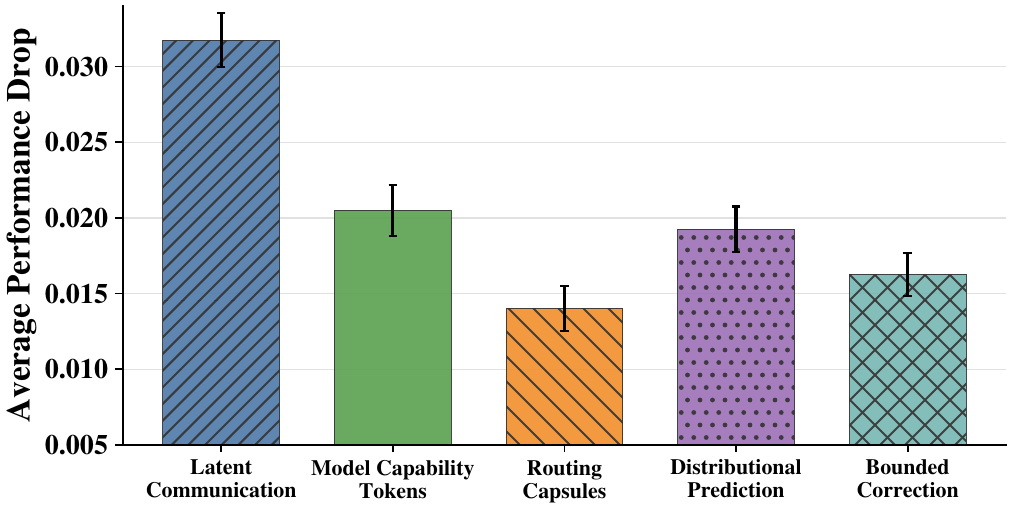}
        \caption{Ablation Study}
    \end{subfigure}
    \caption{Mechanism analysis.
(a) Performance improves quickly and saturates after $H=2$, supporting a shallow latent communication design.
(b) Ablations show latent communication is the key component, with other modules providing complementary gains.
    }
    \label{fig:mechanism_analysis}
\end{figure}

\subsection{Model-Pool Flexibility and Efficiency}
\label{sec:deployment_results}

For \textbf{RQ4}, Figure~\ref{fig:deployment_analysis} evaluates whether \textsc{LatentRouter} remains useful when the available model pool changes and whether the router itself is lightweight. In Figure~\ref{fig:deployment_analysis}a, performance drops most when the strongest model is removed, which reflects the lower ceiling of the remaining pool. Removing the cheapest model has little effect on the primary metric, while random removal and leave-one-model-out produce intermediate changes. This pattern suggests that \textsc{LatentRouter} can mask unavailable models and re-rank the remaining candidates through shared model-token scoring.

Figure~\ref{fig:deployment_analysis}b shows the same behavior from the oracle-regret perspective. Regret increases when useful models are removed, but remains bounded under non-degenerate pool changes, indicating that the router still makes meaningful selections over the available subset. Figures~\ref{fig:deployment_analysis}c and~\ref{fig:deployment_analysis}d summarize router-side efficiency. \textsc{LatentRouter} achieves the best average primary metric while maintaining the lowest router latency among learned methods. This shows that the capsule-token communication module improves model selection while keeping router overhead small relative to MLLM inference.

\begin{figure}[t]
    \centering
    \begin{subfigure}[t]{0.245\linewidth}
        \centering
        \includegraphics[width=\linewidth]{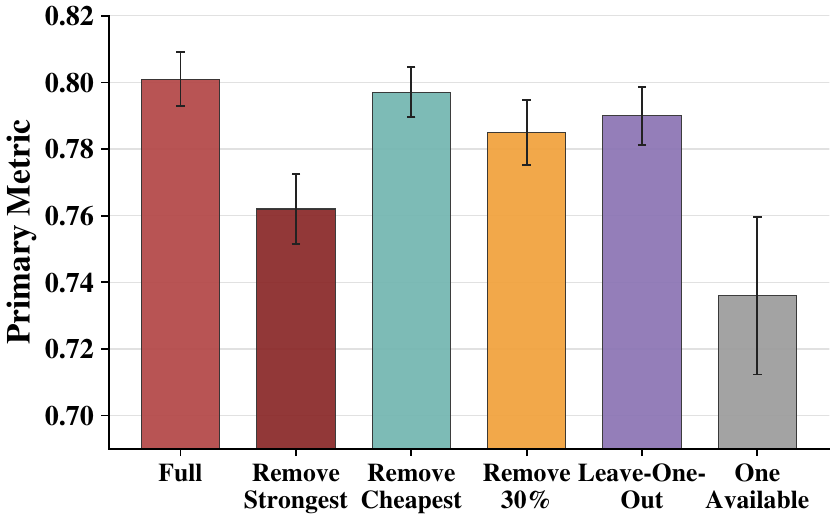}
        \caption{Changed pools}
    \end{subfigure}
    \hfill
    \begin{subfigure}[t]{0.245\linewidth}
        \centering
        \includegraphics[width=\linewidth]{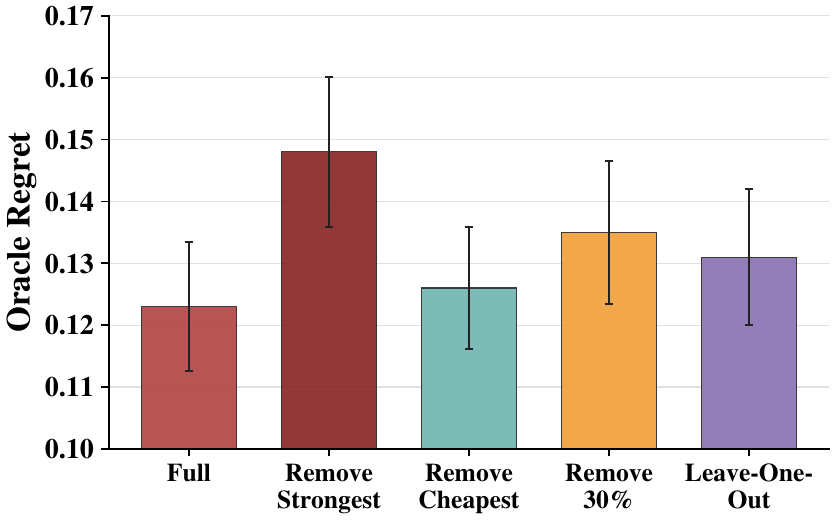}
        \caption{Oracle regret}
    \end{subfigure}
    \hfill
    \begin{subfigure}[t]{0.245\linewidth}
        \centering
        \includegraphics[width=\linewidth]{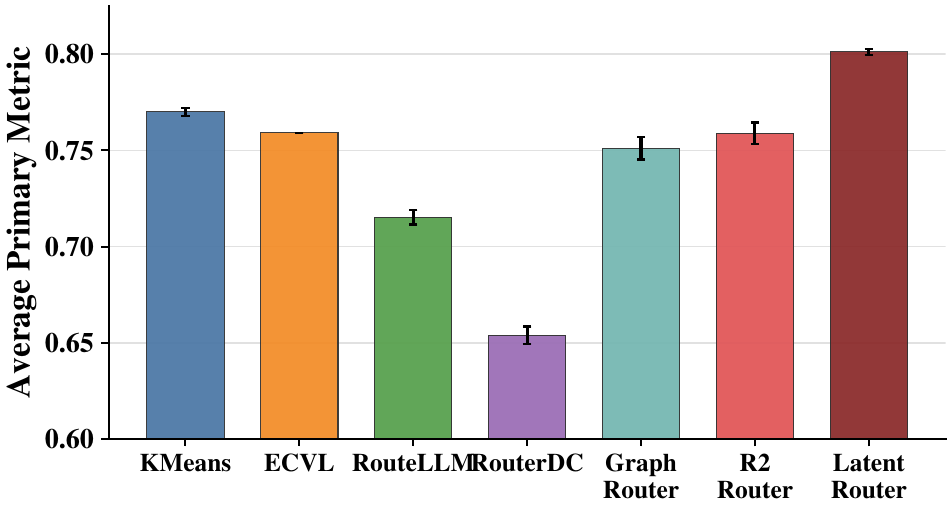}
        \caption{Router accuracy}
    \end{subfigure}
    \hfill
    \begin{subfigure}[t]{0.245\linewidth}
        \centering
        \includegraphics[width=\linewidth]{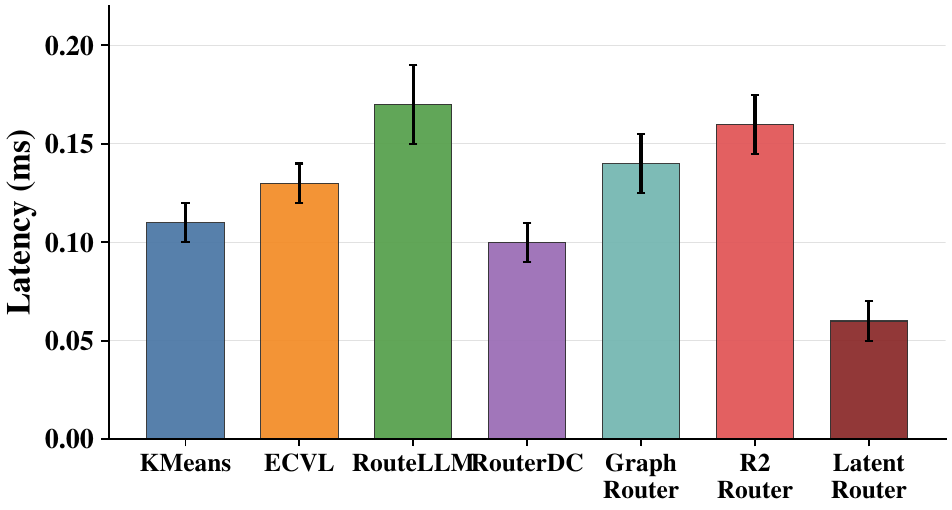}
        \caption{Router latency}
    \end{subfigure}
    \caption{
    Model-pool flexibility and efficiency analysis.
(a)--(b) \textsc{LatentRouter} handles changed candidate pools through availability masking and re-ranking.
(c)--(d) It achieves the best average primary metric with the lowest router latency among learned routers.
    }
    \label{fig:deployment_analysis}
\end{figure}

%% file: related.tex
\section{Related Work}
\label{sec:relate}

\textbf{Routing in Multi-LLM and MLLM Systems}. Routing in text-only multi-LLM systems has been widely studied as a way to improve the quality-cost tradeoff by selecting an appropriate model for each query. Prior work uses query-level preference or difficulty signals~\cite{ong2024routellm,ding2024hybridllm}, confidence or uncertainty estimates~\cite{zhang2025uncertaintyrouting}, cascading policies~\cite{chen2023frugalgpt,dekoninck2024unifiedrouting}, benchmark-driven formulations~\cite{hu2024routerbench}, or hierarchical structures~\cite{varangotreille2025routing}. These methods provide useful foundations, but they are primarily designed for language-only inputs. MLLM routing introduces additional multimodal signals, such as OCR density, chart layout, visual grounding, spatial relations, and diagram-based reasoning, that may not be recoverable from the question text alone. Recent benchmarks formalize this setting: MMR-Bench studies MLLM routing under controlled model pools, task mixtures, and cost budgets~\cite{ma2026mmrbench}, while VL-RouterBench shows substantial routability in vision-language model pools and a remaining gap to oracle selection~\cite{huang2025vlrouterbench}. Our work complements these by proposing a novel MLLM router architecture for counterfactual utility prediction. 

\textbf{Latent reasoning}. Latent reasoning studies whether intermediate computation can be carried out in continuous hidden states instead of being fully verbalized in natural language~\cite{chen2025latentcot_survey,zhu2025latent_reasoning_survey}. Coconut shows that language models can reuse hidden states as continuous reasoning states in later computation~\cite{hao2025coconut}. In multimodal reasoning, PLUME argues that verbalized chain-of-thought can create a bottleneck for rich visual information and explores latent rollouts for multimodal representations~\cite{he2026plume}; Modal-Mixed CoT studies reasoning with both text and latent visual sketches when intermediate states are not naturally expressed as text~\cite{shao2026modalmixed}; and Monet investigates latent visual reasoning through intermediate visual thoughts~\cite{wang2025monet}. More broadly, prior work on vision-language models suggests that textual or pooled representations can lose compositional visual information~\cite{kamath2023text_bottleneck}. Our work is related in motivation, but uses latent communication for a different purpose: the latent states are optimized to predict outcomes for routing, not to generate intermediate reasoning traces within a single MLLM.

%% file: con.tex
\section{Conclusion}

We introduced \textsc{LatentRouter}, an MLLM router that formulates model selection as counterfactual multimodal utility prediction. Instead of directly classifying a query into a model ID, \textsc{LatentRouter} matches multimodal routing capsules with model capability tokens through latent communication, predicts per-model counterfactual outcomes, and selects by utility. Experiments on MMR-Bench and VL-RouterBench show consistent gains over fixed-model, feature-level, and learned-router baselines in both performance-oriented and performance-cost settings. Further analyses indicate that the gains are strongest on visually and reasoning-intensive task groups, and that latent communication is the key component. Future work includes live deployment studies, adaptive calibration for newly added MLLMs, and extensions to multi-step multimodal routing.

%% file: appendix.tex
\section*{Technical Appendices and Supplementary Material}

\section{Additional Experimental Details}
\label{app:exp_details}

This appendix provides experimental details that are omitted from the main text for space. We first list the candidate MLLM pools used in each benchmark. We then describe how baselines are adapted to the benchmark setting, how model capability profiles are constructed, and how hyperparameters and random seeds are handled.

\subsection{Candidate Model Pools}
\label{app:model_pools}

Both benchmarks define a fixed candidate model pool and provide per-model outcome annotations for each query. We follow the original benchmark model pools without adding or removing candidate models in the main experiments. MMR-Bench contains 11 candidate MLLMs, while VL-RouterBench contains 17 candidate MLLMs. Table~\ref{tab:model_pools_appendix} lists the full model pools used in our experiments.

\begin{table}[h]
\centering
\small
\caption{Candidate MLLM pools used in each benchmark.}
\label{tab:model_pools_appendix}
\scriptsize
\begin{tabular}{clp{0.72\linewidth}}
\toprule
Benchmark & \# Models & Candidate MLLMs \\
\midrule
MMR-Bench & 11 &
Claude3-7V\_Sonnet; GeminiFlash2-5; GeminiPro2-5; Gemma3-4B; InternVL3-78B; Qwen2.5-VL-3B-Instruct; Qwen2.5-VL-7B-Instruct; Qwen2.5-VL-72B-Instruct; gpt-5-2025-08-07; gpt-5-minimal-2025-08-07; gpt-5-nano-2025-08-07. \\
\midrule
VL-RouterBench & 17 &
deepseek\_vl2; deepseek\_vl2\_tiny; GeminiFlash2-5; Gemma3-27B; GPT4o; InternVL2\_5-78B; Janus-Pro-1B; Janus-Pro-7B; Kimi-VL-A3B-Thinking-2506; llava\_next\_vicuna\_7b; MiMo-VL-7B-RL; Phi-3.5-Vision; Pixtral-12B; Qianfan-VL-8B; Qwen2.5-VL-32B-Instruct; Qwen2.5-VL-72B-Instruct; SmolVLM2. \\
\bottomrule
\end{tabular}
\end{table}

\subsection{Benchmark Metrics}
\label{app:benchmark_metrics}

We use the official metrics from each benchmark. For the performance-oriented setting, routers are evaluated by the answer quality of the selected model. For the performance-cost setting, routers are evaluated by the benchmark-native cost-aware metric. MMR-Bench reports normalized area under the cost-quality curve, denoted nAUC, which summarizes the quality-cost Pareto frontier across operating points. VL-RouterBench reports its official Rank Score, which measures routing quality under the benchmark's prescribed cost-aware evaluation protocol. We do not introduce new evaluation metrics for the main comparison, so all reported numbers are comparable to the original benchmark protocols.

\subsection{Baseline Adaptation}
\label{app:baseline_adaptation}

All trainable baselines are evaluated under the same data splits, candidate model pools, cost annotations, and router input features as \textsc{LatentRouter}. This ensures that differences in performance come from the routing method rather than from using different multimodal encoders, preprocessing pipelines, or training data.

For the feature-level baselines, KNNRouter, MLPMFRouter, and KMeansRouter use the benchmark-provided query features and follow the standard training and validation splits. These methods operate directly on feature-level representations and do not use the proposed routing capsules, model capability tokens, latent communication layers, or distributional outcome head.

For learned router baselines, we adapt ECVL~\cite{tang2025ecvl}, RouteLLM~\cite{ong2024routellm}, and RouterDC~\cite{chen2024routerdc} to the same MLLM routing setting. ECVL is included as a recent vision-language router baseline. RouteLLM and RouterDC are originally designed for text-based LLM routing, so we provide them with the same benchmark-provided multimodal query features and candidate-pool information used by other trainable routers. When a baseline requires pairwise preferences or model labels, we derive them from the per-model outcome annotations on the training split. Hyperparameters for all baselines are selected on the validation split and fixed before test evaluation.

\subsection{Model Capability Profiles}
\label{app:model_capability_profiles}

\textsc{LatentRouter} represents each candidate MLLM with a model capability token. The input to this token includes model-level calibration statistics, normalized cost, latency, and available slice-level or pairwise behavior. These statistics are computed only from the training or calibration split, never from the test split.

The calibration profile summarizes how well each model performs overall and, when the benchmark provides task or scenario annotations, how well it performs on different slices such as OCR, document understanding, chart reasoning, STEM reasoning, or general VQA. Pairwise behavior summarizes how often a model outperforms other candidate models on calibration examples. These statistics are normalized before being projected into the hidden space used by \textsc{LatentRouter}. This profile construction allows the router to compare candidate models through model-level capability representations rather than relying on a fixed classifier over model IDs.

\subsection{Training, Validation, and Seeds}
\label{app:training_validation_seeds}

All trainable methods are trained on the official training split, tuned on the validation split, and evaluated on the test split. Hyperparameters are selected using validation performance under the corresponding routing setting and are then fixed for final test evaluation. Unless otherwise specified, we report results over three random seeds. The reported error bars are computed across these seeds.

For \textsc{LatentRouter}, the main hyperparameters include the number of routing capsules $C$, the number of latent communication layers $H$, the hidden dimension, the residual bound $\rho$, the cost tradeoff parameter $\lambda$, and the loss weights for the distributional, pairwise, listwise, utility, and residual regularization terms. The main experiments use $C=7$ routing capsules and $H=2$ latent communication layers unless otherwise stated.

\subsection{Additional Implementation Details}
\label{app:implementation_notes}

All routers use the same benchmark-provided preprocessed files, including image features, question features, side features, model-pool metadata, cost annotations, and model-outcome traces. Unavailable models are masked during attention and scoring. Costs are normalized within each benchmark before being used in cost-aware routing. Model capability profiles are recomputed separately for each benchmark to avoid mixing statistics across different candidate pools or evaluation protocols.

For \textsc{LatentRouter}, the distributional outcome head predicts a mean and uncertainty for each candidate model. The bounded capsule correction is applied to the predicted mean before computing the final utility score. The same outcome head is shared across candidate model tokens, which allows the router to score the available model pool without changing the output dimension.

\section{Theoretical Analysis}
\label{app:proofs_method}

This section provides formal details for the theoretical claims used in the main text. 
Section~\ref{app:interaction_proof} formalizes why MLLM routing requires query-model interaction. 
Section~\ref{app:bounded_residual_proof} proves that the bounded capsule correction cannot overturn sufficiently confident base rankings. 
Section~\ref{app:masking_equivariance} shows why shared per-model scoring with availability masking is compatible with changing candidate pools.

\subsection{Need for Query-Model Interaction}
\label{app:interaction_proof}

The main text argues that MLLM routing should model interactions between query requirements and model capabilities. The following proposition makes this precise.

\begin{proposition}[Need for query-model interaction]
Suppose there exist two queries $x,x'$ and two models $m_i,m_j$ such that
\[
y_i(x)>y_j(x),
\qquad
y_i(x')<y_j(x').
\]
Then no additive scoring rule of the form
\[
s(x,m)=g(x)+h(m)
\]
can correctly rank $m_i$ and $m_j$ for both queries.
\end{proposition}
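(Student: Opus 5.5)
The plan is a direct proof by contradiction, using the observation that an additive score fixes the ordering between any two models independently of the query. First we make precise what ``correctly rank'' means: $s$ ranks $m_i$ and $m_j$ correctly on a query $z$ if the sign of $s(z,m_i)-s(z,m_j)$ agrees with the sign of $y_i(z)-y_j(z)$. With strict inequalities in the hypothesis, the two requirements become
\[
s(x,m_i)>s(x,m_j)
\quad\text{and}\quad
s(x',m_i)<s(x',m_j).
\]

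The key step is to compute the score gap under the additive form. For any query $z$,
\[
s(z,m_i)-s(z,m_j)=\bigl(g(z)+h(m_i)\bigr)-\bigl(g(z)+h(m_j)\bigr)=h(m_i)-h(m_j).
\]
The query term $g(z)$ cancels, so the gap equals the constant $\delta:=h(m_i)-h(m_j)$ for every query. Correct ranking on $x$ needs $\delta>0$, and correct ranking on $x'$ needs $\delta<0$. These cannot both hold, which is the contradiction and proves the claim.

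Finally, we would add two remarks. First, the same argument applies when costs are included: if the target is the utility $u_i=y_i-\lambda c_i$, the cost term depends only on the model and can be absorbed into $h$. Second, the argument covers any score that is additively separable up to a strictly monotone transform $\Phi(g(x)+h(m))$, since such a transform preserves the ordering between models. This motivates the interaction-dependent form $f_\theta(R(x),a_i,A)$ used in the main text.

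There is no real obstacle in this proof. The only care needed is the definitional step: fixing that ``correct ranking'' means strict order agreement, so that the degenerate case $\delta=0$ (a tie) is also counted as incorrect on both queries.
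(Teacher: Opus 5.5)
Your proof is correct and follows the paper's argument: the query term $g$ cancels, so the margin $h(m_i)-h(m_j)$ is the same for every query and cannot be positive on $x$ and negative on $x'$. Your explicit definition of ``correct ranking'' as strict sign agreement, which makes the tie case $h(m_i)=h(m_j)$ fail on both queries, is slightly more careful than the paper. The paper's proof only says the additive router must ``always rank $m_i$ above $m_j$, or always rank $m_j$ above $m_i$'' and leaves ties implicit.
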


\begin{proof}
For an additive score,
\[
s(x,m_i)-s(x,m_j)
=
\big(g(x)+h(m_i)\big)-\big(g(x)+h(m_j)\big)
=
h(m_i)-h(m_j).
\]
The query-dependent term $g(x)$ cancels out. Therefore, the sign of the pairwise margin between $m_i$ and $m_j$ is independent of the query $x$. The additive router must always rank $m_i$ above $m_j$, or always rank $m_j$ above $m_i$, for every input.

However, by assumption, the correct ranking reverses across two queries:
\[
y_i(x)>y_j(x),
\qquad
y_i(x')<y_j(x').
\]
Thus, an additive scoring rule cannot correctly represent both rankings. A router must include a query-model interaction term, such as
\[
s_\theta(x,m_i,\mathcal{M})=f_\theta(R(x),a_i,A),
\]
where $R(x)$ represents query-dependent multimodal routing signals and $a_i$ represents the capability token of model $m_i$.
\end{proof}

Note that this proposition does not claim that \textsc{LatentRouter} is the only possible interaction architecture. It only rules out purely additive scoring functions whose pairwise model ordering is independent of the query.

\subsection{Stability of the Bounded Capsule Correction}
\label{app:bounded_residual_proof}

The main text uses a bounded capsule correction to refine close predictions without letting residual signals dominate the base outcome estimate. The following proposition formalizes this stability:

\begin{proposition}[Bounded residual stability]
Let
\[
\tilde{\mu}_i=\mu_i+\Delta_i^{\mathrm{cap}},
\]
where
\[
|\Delta_i^{\mathrm{cap}}|\leq \rho
\]
for all candidate models $m_i$. For any two models $m_i,m_j$, if
\[
|\mu_i-\mu_j|>2\rho,
\]
then the bounded capsule correction cannot reverse their ranking.
\end{proposition}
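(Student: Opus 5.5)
The plan is a direct margin argument. By symmetry, relabeling the two models if necessary, I will assume $\mu_i>\mu_j$, so the hypothesis $|\mu_i-\mu_j|>2\rho$ becomes $\mu_i-\mu_j>2\rho$. The goal is to show that the corrected ordering agrees with the base ordering, i.e.\ $\tilde{\mu}_i>\tilde{\mu}_j$.

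The key step is to write the corrected margin as the base margin plus the difference of corrections:
\[
\tilde{\mu}_i-\tilde{\mu}_j
=
(\mu_i-\mu_j)+\big(\Delta_i^{\mathrm{cap}}-\Delta_j^{\mathrm{cap}}\big).
\]
The correction term is then bounded below using $|\Delta_i^{\mathrm{cap}}|\leq\rho$ and $|\Delta_j^{\mathrm{cap}}|\leq\rho$. By the triangle inequality this gives $\Delta_i^{\mathrm{cap}}-\Delta_j^{\mathrm{cap}}\geq -2\rho$, which is the worst case where $m_i$ is pushed down by $\rho$ and $m_j$ pushed up by $\rho$. Substituting yields
\[
\tilde{\mu}_i-\tilde{\mu}_j \;\geq\; (\mu_i-\mu_j)-2\rho \;>\; 0.
\]
So the ranking is preserved, and the inequality is strict.

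Two remarks complete the argument. First, the assumption $|\Delta_i^{\mathrm{cap}}|\leq\rho$ holds automatically for the architecture in Section~\ref{sec:main_method}, since $\Delta_i^{\mathrm{cap}}=\rho\tanh(\cdot)$ and $|\tanh|<1$. Second, the same conclusion carries over to the routing scores $s_i=\tilde{\mu}_i-\lambda c_i$, because the cost penalty is added identically to both the base and corrected scores. The statement then holds provided the condition is read as $|(\mu_i-\lambda c_i)-(\mu_j-\lambda c_j)|>2\rho$.

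There is no real obstacle; the only care needed is to state the symmetric case reduction cleanly. It is also worth noting that the threshold $2\rho$ is tight: with opposite corrections of size exactly $\rho$, a base margin equal to $2\rho$ can become a tie.
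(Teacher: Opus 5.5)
Your proof is correct and matches the paper's argument: write the corrected margin as $(\mu_i-\mu_j)+(\Delta_i^{\mathrm{cap}}-\Delta_j^{\mathrm{cap}})$, then bound the correction term by $2\rho$ in magnitude, so the sign of the margin cannot flip. Your WLOG reduction to $\mu_i>\mu_j$ with a one-sided bound is a cleaner rendering of the paper's two-sided statement, and your extension to the cost-adjusted scores is a valid addition, since you correctly restate the hypothesis in terms of $\mu_i-\lambda c_i$.
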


\begin{proof}
The corrected pairwise margin is
\[
\tilde{\mu}_i-\tilde{\mu}_j
=
(\mu_i+\Delta_i^{\mathrm{cap}})
-
(\mu_j+\Delta_j^{\mathrm{cap}})
=
(\mu_i-\mu_j)
+
(\Delta_i^{\mathrm{cap}}-\Delta_j^{\mathrm{cap}}).
\]
Because each correction is bounded by $\rho$,
\[
|\Delta_i^{\mathrm{cap}}|\leq \rho,
\qquad
|\Delta_j^{\mathrm{cap}}|\leq \rho.
\]
By the triangle inequality,
\[
|\Delta_i^{\mathrm{cap}}-\Delta_j^{\mathrm{cap}}|
\leq
|\Delta_i^{\mathrm{cap}}|+|\Delta_j^{\mathrm{cap}}|
\leq
2\rho.
\]
Therefore, the correction can change the base margin by at most $2\rho$ in magnitude. If
\[
|\mu_i-\mu_j|>2\rho,
\]
then the correction cannot change the sign of the pairwise margin. Hence, the corrected scores $\tilde{\mu}_i$ and $\tilde{\mu}_j$ preserve the same ranking as the base scores $\mu_i$ and $\mu_j$.
\end{proof}

\paragraph{Scope.}
This result does not guarantee that the bounded correction always improves routing accuracy. It only shows that the correction acts as a controlled adjustment: it can affect uncertain or close decisions, but it cannot overturn rankings whose base margin is larger than $2\rho$.

\subsection{Shared Scoring under Changing Model Pools}
\label{app:masking_equivariance}

\textsc{LatentRouter} scores candidate models through shared model capability tokens and masks unavailable models. This design supports changing candidate pools without requiring a fixed classifier over model IDs. We formalize this property as permutation equivariance over available model tokens.

\begin{proposition}[Permutation equivariance of shared per-model scoring]
Let $A=\{a_1,\ldots,a_K\}$ be the set of model capability tokens and let $\Omega(x)$ be the availability mask. Suppose the latent communication layers use shared attention parameters, shared feed-forward parameters, and availability-masked self-attention over model tokens. Suppose the outcome head $g_\theta$ is shared across model tokens. For any permutation $\pi$ of the model-token order, if both the model tokens and availability mask are permuted consistently, then the predicted scores are permuted in the same way:
\[
s_\theta(x,m_{\pi(i)},\pi(A),\pi(\Omega))
=
\pi\!\left(s_\theta(x,m_i,A,\Omega)\right)_i .
\]
\end{proposition}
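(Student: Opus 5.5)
The plan is to show that every stage of the pipeline computing $s_\theta$ is permutation-equivariant in the model index, and then compose. Let $P_\pi$ act on any per-model family by $(P_\pi z)_i = z_{\pi^{-1}(i)}$, and equivalently relabel the mask as $\pi(\Omega)=\{\pi(i): i\in\Omega\}$.

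The query-side quantities $R^{(0)}(x)$ do not depend on the model order at all, since the capsule extraction only sees $[V;Q]$. So it suffices to show by induction on the layer index $h$ that the pair $(R^{(h)}, A^{(h)})$ transforms as $(R^{(h)}, P_\pi A^{(h)})$.

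The base case is immediate. The initial token $a_i^{(0)}=\phi([p_i;c_i;\ell_i;b_i])$ applies one shared map to each model's own metadata, so permuting the metadata permutes the tokens. One caveat needs an explicit assumption: if $b_i$ contains pairwise statistics, these must be indexed by model identity rather than by position, so that relabelling permutes them consistently.

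The inductive step checks the three updates in turn.

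\textbf{(i) Model-conditioned reading of capsules.} Here $\hat r_i^{(h)}$ and $\bar a_i^{(h)}$ depend only on $a_i^{(h)}$, $R^{(h)}$, and shared weights. Each token is therefore mapped by the same function, which is equivariant.

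\textbf{(ii) Pairwise-biased masked self-attention.} This is the step I expect to need the most care. The bias $b_{ij}$ is a shared function of the ordered pair $(\bar a_i,\bar a_j)$, so permuting the tokens gives $b'_{ij}=b_{\pi^{-1}(i)\pi^{-1}(j)}$, i.e.\ $B'=P_\pi B P_\pi^\top$. The masked logits transform the same way: the mask entry for $j$ becomes $\mathbb 1[j\in\pi(\Omega)]$. Row-wise softmax commutes with simultaneous row and column permutation, and the value aggregation is a sum over $j$ that is invariant under reindexing. Hence $A^{(h+1)}$ is permuted by $P_\pi$. Rows belonging to unavailable models are either never used downstream or are produced by the same shared rule, so they also permute.

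\textbf{(iii) Feedback to capsules.} The layerwise scores $s_i^{(h)}$ come from a shared head, so they permute. The quantity $\max_j s_j^{(h)}$ and the softmax normalizer over $\Omega$ are symmetric functions, so the weights satisfy $\omega'=P_\pi\omega$. The capsule update is a cross-attention from $R^{(h)}$ to the multiset $\{\omega_i a_i^{(h+1)}\}_{i\in\Omega}$. Cross-attention without positional encodings on the key/value side is invariant to the order of its keys and values. Therefore $R^{(h+1)}$ is unchanged, which completes the induction.

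To finish, apply the shared head $g_\theta$ to each $a_i^{(H)}$, giving per-model $\mu_i,\sigma_i$. The correction $\Delta_i^{\mathrm{cap}}$ uses $a_i^{(H)}$ together with $\mathrm{Pool}(R^{(H)})$, and the pool is invariant by the induction. The cost term $\lambda c_i$ permutes along with the metadata. Therefore $s=\tilde\mu-\lambda c$ satisfies $s'=P_\pi s$, which is the claimed identity. As a corollary, $\arg\max_{i\in\pi(\Omega)}$ selects the same underlying model, up to tie-breaking.

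The main obstacle is making the implicit assumptions precise rather than proving anything deep. The argument requires:
\begin{itemize}
\item no positional or model-ID embeddings on the model tokens;
\item a key/value-order-invariant cross-attention in the feedback step;
\item identity-indexed pairwise statistics in $b_i$.
\end{itemize}
I would state these explicitly as hypotheses alongside the proposition's ``shared parameters'' assumption.
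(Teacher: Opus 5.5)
Your proposal is correct and follows the same route as the paper's proof. Both show that the model-to-capsule cross-attention, the availability-masked self-attention, the capsule feedback and the shared head are each equivariant under a consistent relabelling of tokens and mask (invariant, in the case of the capsules), and then compose these facts. Your version is more explicit: it handles the pairwise bias $B^{(h)}$, the feedback weights $\omega^{(h)}$, the bounded correction, the cost term, and the no-positional-encoding assumption, all of which the paper's terse argument leaves implicit; since the proposition permutes $A$ directly, your caveat on $b_i$ is needed only if tokens are recomputed from metadata.
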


\begin{proof}
Cross-attention from model tokens to routing capsules applies the same attention parameters to each model token. Therefore, permuting the order of model tokens permutes the corresponding cross-attended model states in the same way. Availability-masked self-attention over model tokens is also permutation equivariant when the availability mask is permuted consistently, because attention weights depend on token content and mask entries, not on absolute model positions. The feedback attention from routing capsules to model tokens aggregates over the same permuted set of available model states, so the capsule update is invariant to model-token order up to the corresponding permutation of model states. Finally, the shared outcome head $g_\theta$ is applied independently to each final model token. Therefore, permuting model tokens and their availability mask produces the same permutation of predicted scores.
\end{proof}

This property explains why \textsc{LatentRouter} can score different available model subsets through masking and shared per-model scoring. It does not by itself guarantee good performance on newly added models; reliable new-model routing still depends on informative model capability profiles.

\section{Additional Experiments}
\label{app:additional_results}

This appendix provides additional experimental details and supporting results omitted from the main text for space. We include task-group construction, full ablation values, model-pool flexibility statistics, efficiency numbers, and additional sensitivity and cold-start analyses.

\subsection{Task Group Construction}
\label{app:task_group_mapping}

For task-level analysis, we group datasets from MMR-Bench and VL-RouterBench into coarse categories according to the benchmark-provided task or scenario definitions. The radar plot in Figure~\ref{fig:task_group_radar} uses these groups only for analysis; all routers are trained and evaluated on the official benchmark splits.

\begin{table}[h]
\scriptsize
\centering
\setlength\tabcolsep{2pt}
\caption{Mapping from benchmark datasets to coarse task groups.}
\label{tab:task_group_mapping}
\begin{tabular}{lll}
\toprule
Task Group & MMR-Bench Datasets & VL-RouterBench Datasets \\
\midrule
OCR / Document QA 
& OCRBench, SEED-Bench v2 Plus
& DocVQA, TextVQA, OCRBench \\
Chart / Diagram
& -- 
& ChartQA, AI2D \\
Math / STEM 
& MathVista, MathVision, MathVerse
& MathVista, MathVision, MathVerse \\
General VQA / Grounding
& MMStar, RealWorldQA
& MMBench, MMStar, MMMU, RealWorldQA, InfoVQA, HallusionBench \\
\bottomrule
\end{tabular}
\end{table}

\subsection{Full Ablation Results}
\label{app:full_ablation}

Table~\ref{tab:full_ablation} gives the full ablation numbers corresponding to Figure~\ref{fig:mechanism_analysis}b. The full model is strongest across all four settings. Removing latent communication gives the largest average degradation, while removing model capability tokens or routing capsules also consistently hurts. These results support the interpretation that \textsc{LatentRouter} benefits from both sides of the latent capability-matching design: query-side routing capsules and model-side capability tokens.

\begin{table}[h]
\centering
\small
\setlength\tabcolsep{4pt}
\caption{Full ablation results. Columns follow the same order as Table~\ref{tab:main_results}.}
\label{tab:full_ablation}
\begin{tabular}{lcccc}
\toprule
Variant & MMR-Oriented & MMR-Cost & VL-Oriented & VL-Cost \\
\midrule
Full \textsc{LatentRouter} 
& \textbf{0.792$\pm$0.001} 
& \textbf{0.786$\pm$0.001} 
& \textbf{0.815$\pm$0.001} 
& \textbf{0.812$\pm$0.003} \\
w/o routing capsules 
& 0.770$\pm$0.002 
& 0.768$\pm$0.002 
& 0.797$\pm$0.003 
& 0.799$\pm$0.002 \\
w/o model capability tokens 
& 0.766$\pm$0.003 
& 0.764$\pm$0.002 
& 0.789$\pm$0.004 
& 0.791$\pm$0.003 \\
w/o latent communication 
& 0.758$\pm$0.002 
& 0.756$\pm$0.002 
& 0.781$\pm$0.004 
& 0.783$\pm$0.003 \\
w/o distributional prediction 
& 0.764$\pm$0.002 
& 0.762$\pm$0.002 
& 0.792$\pm$0.003 
& 0.794$\pm$0.002 \\
w/o bounded correction 
& 0.776$\pm$0.002 
& 0.774$\pm$0.001 
& 0.784$\pm$0.002 
& 0.784$\pm$0.001 \\
\bottomrule
\end{tabular}
\end{table}

\subsection{Model-Pool Flexibility Details}
\label{app:pool_robustness}

Table~\ref{tab:pool_robustness_full} reports the full mean $\pm$ standard deviation values for model-pool flexibility. For each modified pool, oracle and fixed-model baselines are recomputed over the available models only. The one-available-model setting has zero oracle regret because the available model is also the pool oracle.

\begin{table}[h]
\centering
\small
\setlength\tabcolsep{5pt}
\caption{Model-pool flexibility under changed candidate pools. Primary Metric averages the four main benchmark settings.}
\label{tab:pool_robustness_full}
\begin{tabular}{lccc}
\toprule
Pool Setting & Primary Metric $\uparrow$ & Oracle Regret $\downarrow$ & Latency (ms) $\downarrow$ \\
\midrule
Full pool 
& 0.801$\pm$0.002 
& 0.128$\pm$0.018 
& 0.052$\pm$0.004 \\
Remove strongest 
& 0.756$\pm$0.018 
& 0.151$\pm$0.021 
& 0.050$\pm$0.004 \\
Remove cheapest 
& 0.792$\pm$0.013 
& 0.130$\pm$0.017 
& 0.051$\pm$0.004 \\
Remove random 30\% 
& 0.779$\pm$0.017 
& 0.139$\pm$0.020 
& 0.047$\pm$0.003 \\
Leave-one-model-out 
& 0.784$\pm$0.015 
& 0.135$\pm$0.019 
& 0.050$\pm$0.004 \\
One available model 
& 0.731$\pm$0.041 
& 0.000$\pm$0.000 
& 0.018$\pm$0.002 \\
\bottomrule
\end{tabular}
\end{table}

\subsection{Efficiency Details}
\label{app:efficiency_details}

Table~\ref{tab:efficiency_full} provides the numerical efficiency values used in Figure~\ref{fig:deployment_analysis}. Avg. Primary is computed from the same four main settings in Table~\ref{tab:main_results}: MMR-oriented, MMR-cost, VL-oriented, and VL-cost. Router latency measures only the router forward pass and excludes selected MLLM inference time.

\begin{table}[h]
\centering
\small
\setlength\tabcolsep{5pt}
\caption{Router-side efficiency. Avg. Primary is averaged over the four main benchmark settings in Table~\ref{tab:main_results}.}
\label{tab:efficiency_full}
\begin{tabular}{lccc}
\toprule
Method & Avg. Primary $\uparrow$ & Router Latency (ms) $\downarrow$ & Router Size (M) $\downarrow$ \\
\midrule
KMeansRouter & 0.770 $\pm$ 0.002 & 0.11 $\pm$ 0.01 & 1.4 \\
ECVL & 0.759 $\pm$ 0.000 & 0.13 $\pm$ 0.01 & 1.8 \\
RouteLLM & 0.715 $\pm$ 0.004 & 0.17 $\pm$ 0.02 & 2.4 \\
RouterDC & 0.654 $\pm$ 0.005 & 0.10 $\pm$ 0.01 & 1.2 \\
GraphRouter & 0.751 $\pm$ 0.006 & 0.14 $\pm$ 0.02 & 1.9 \\
R2-Router & 0.759 $\pm$ 0.006 & 0.16 $\pm$ 0.02 & 2.1 \\
\textsc{LatentRouter} & \textbf{0.801 $\pm$ 0.002} & \textbf{0.06 $\pm$ 0.01} & \textbf{0.9} \\
\bottomrule
\end{tabular}
\end{table}

The efficiency comparison supports a practical interpretation of the method. \textsc{LatentRouter} obtains the strongest average primary metric while using the smallest and fastest router among learned methods. The comparison with GraphRouter and R2-Router is especially informative: interaction-based routers can improve over simple feature-level routing, but \textsc{LatentRouter} achieves a better quality--efficiency tradeoff by using lightweight shared model-token scoring and a compact capsule-token communication module. This suggests that the gains are not explained by a larger router, but by the routing-specific inductive bias for matching multimodal query requirements with candidate model capabilities.

\subsection{Cold-Start Model Insertion}
\label{app:cold_start_models}

To further test model-pool flexibility, we evaluate whether \textsc{LatentRouter} can score a newly inserted or held-out model. For each benchmark, we hold out one candidate model during router training, compute its capability profile from a small calibration subset, and insert it back into the candidate pool at test time. The router must score the held-out model through its capability token rather than through a learned fixed model-ID classifier.

\begin{table}[h]
\centering
\small
\caption{Cold-start model insertion with different calibration sizes. Results report the average primary metric across benchmarks.}
\label{tab:cold_start_models}
\begin{tabular}{lcc}
\toprule
Calibration examples per held-out model & Avg. Primary $\uparrow$ & Oracle Regret $\downarrow$ \\
\midrule
0 & 0.771$\pm$0.006 & 0.153$\pm$0.008 \\
16 & 0.781$\pm$0.005 & 0.142$\pm$0.007 \\
64 & 0.789$\pm$0.004 & 0.134$\pm$0.006 \\
128 & 0.793$\pm$0.003 & 0.130$\pm$0.005 \\
Full calibration & 0.801$\pm$0.002 & 0.128$\pm$0.004 \\
\bottomrule
\end{tabular}
\end{table}

Performance improves smoothly as more calibration examples become available. With no calibration examples, the router is still usable but less accurate because the new model token lacks reliable capability statistics. With 64--128 examples, performance approaches the full-calibration setting. This supports the model-token design: new candidate MLLMs can be inserted without changing the router architecture, but reliable routing still benefits from representative calibration data.

\begin{table}[t]
\centering
\small
\caption{Sensitivity to the number of multimodal routing capsules $C$.}
\label{tab:capsule_sensitivity}
\begin{tabular}{lcc}
\toprule
Setting & Avg. Primary $\uparrow$ & Avg. Oracle Regret $\downarrow$ \\
\midrule
$C=1$ & 0.781$\pm$0.004 & 0.143$\pm$0.006 \\
$C=4$ & 0.790$\pm$0.003 & 0.135$\pm$0.005 \\
$C=7$ & \textbf{0.801$\pm$0.002} & \textbf{0.128$\pm$0.004} \\
$C=12$ & 0.795$\pm$0.003 & 0.129$\pm$0.005 \\
\bottomrule
\end{tabular}
\end{table}

\subsection{Sensitivity to Routing Capsules}
\label{app:capsule_sensitivity}

We also vary the number of routing capsules $C$. Table~\ref{tab:capsule_sensitivity} shows that performance improves from $C=1$ to $C=7$, indicating that multiple capsules help preserve different multimodal routing signals. Increasing to $C=12$ does not yield further improvement, so we use $C=7$ as the default configuration.

\subsection{Live Model API Validation}
\label{app:live_api_validation}

We further validate whether \textsc{LatentRouter} can execute the deployment path with actual model calls. The live pool contains five locally served vision-language models through Ollama: \texttt{llama3.2-vision}, \texttt{gemma3}, \texttt{llava:7b}, \texttt{moondream}, and \texttt{minicpm-v}. We evaluate on a held-out VL-RouterBench subset covering Chart/Diagram, General VQA, Math/STEM, and OCR/Document QA. For each query, every deployable router selects exactly one model before observing any answer. All models pass the availability check, and all calls succeed. Since the models are locally served, configured monetary cost is zero; we therefore report quality, latency, and a quality-latency score.

\begin{table}[h]
\centering
\small
\setlength\tabcolsep{6pt}
\caption{
Live model API validation. Quality is the benchmark-compatible answer score ($\uparrow$), latency is selected-model wall-clock latency in seconds ($\downarrow$), and Q-L score is $\mathrm{Quality}-0.05\cdot\mathrm{Latency}$ ($\uparrow$).
}
\label{tab:live_api_validation}
\begin{tabular}{lccc}
\toprule
Router & Quality $\uparrow$ & Latency $\downarrow$ & Q-L Score $\uparrow$ \\
\midrule
\textsc{LatentRouter} & \textbf{0.458} & 0.610 & \textbf{0.428} \\
Strongest & 0.375 & 0.530 & 0.349 \\
Random & 0.333 & 0.780 & 0.294 \\
Cheapest & 0.292 & \textbf{0.490} & 0.268 \\
\bottomrule
\end{tabular}
\end{table}

Table~\ref{tab:live_api_validation} shows that \textsc{LatentRouter} achieves the best live quality and the best quality-latency score while maintaining moderate latency. Unlike fixed policies, it selects different models for different queries, confirming that the learned policy performs query-dependent model selection in the live pool. On a diagnostic oracle subset, the oracle upper bound reaches quality $0.542$, while \textsc{LatentRouter} obtains the smallest deployable oracle regret. This live validation is complementary to the offline benchmarks: the offline results provide controlled routing evidence, while the live study verifies that the same routing pipeline works under real model-call conditions.

\section{LLM Usage}
\label{app:llm_usage}

Large language models were used only for writing assistance, including grammar checking, wording refinement, formatting suggestions, and readability editing. They were not used to generate the core methodology, design experiments, produce experimental results, create evaluation labels, or make routing decisions. All technical claims, mathematical formulations, experimental settings, and reported results were checked and finalized by the authors.

\section{Limitation}
\label{app:limitations}

This work has a few limitations that suggest natural directions for future study. First, our experiments follow the offline evaluation protocols of MMR-Bench and VL-RouterBench, where per-model outcome annotations are available for controlled router comparison. This is the standard setting for current routing benchmarks, but live deployment may involve additional factors such as changing model versions, API availability, dynamic pricing, and latency variation. Second, \textsc{LatentRouter} uses calibration statistics to construct model capability tokens. The framework can support changing model pools through these tokens, but newly added models still benefit from representative calibration examples. Third, our evaluation focuses on benchmark-defined image-question routing tasks. Extending the same formulation to open-ended multimodal agents, multi-step tool use, or interactive routing remains an interesting direction. Finally, the routing capsules are learned latent states rather than human-readable explanations. They improve model selection, but they should not be interpreted as faithful natural-language rationales. Future work can study stronger interpretability tools, live-system evaluation, and more adaptive calibration for evolving MLLM pools.

\section{Broader Impact}\label{app:broader_impact}MLLM routing can have positive practical impact by making multimodal AI systems more efficient and accessible. Instead of always calling the largest or most expensive MLLM, a router can select a model that is sufficient for the current query, potentially reducing inference cost, latency, and energy consumption. This can make multimodal systems easier to deploy in resource-constrained settings and can support more flexible use of heterogeneous open-source and closed-source model pools. At the same time, routing introduces additional risks. A cost-aware router may sometimes select a cheaper but less capable model, which could reduce answer quality on difficult or safety-sensitive queries. This is especially important in domains where multimodal errors may have high consequences, such as medical, legal, financial, accessibility, or public-sector applications. A deployed router should therefore include appropriate safeguards, such as confidence thresholds, fallback to stronger models, monitoring of failure cases, and task-specific restrictions for high-risk settings. Routing can also affect fairness and reliability. If calibration data underrepresents certain languages, visual styles, document formats, cultural contexts, or accessibility-related inputs, the router may learn inaccurate capability profiles and route those queries to weaker models. Periodic recalibration and evaluation across diverse user groups and input types are therefore important. In addition, because \textsc{LatentRouter} uses benchmark-provided model outcomes and model-level metadata, any biases or errors in the underlying candidate MLLMs can be inherited by the routing policy. The router should be viewed as a model-selection mechanism, not as a guarantee of correctness or safety. Finally, model-pool routing may increase dependence on proprietary model APIs if the best-performing or best-calibrated models are closed-source. We encourage transparent reporting of model versions, costs, latency assumptions, and calibration protocols when deploying or evaluating routing systems.